\documentclass[conference]{IEEEtran}
\IEEEoverridecommandlockouts
\usepackage{cite}
\usepackage{amsmath,amssymb,amsfonts,amsthm}
\usepackage{stmaryrd} 
\usepackage{graphicx}
\usepackage{textcomp}
\usepackage{xcolor}
\def\BibTeX{{\rm B\kern-.05em{\sc i\kern-.025em b}\kern-.08em
    T\kern-.1667em\lower.7ex\hbox{E}\kern-.125emX}}

\usepackage[linesnumbered,ruled,vlined]{algorithm2e}

\usepackage{bm}
\usepackage{todonotes}
\usepackage{flushend} 

\DeclareMathOperator{\tr}{tr}
\DeclareMathOperator{\diag}{diag}

\DeclareMathOperator{\rank}{rank}

\newtheorem{theorem}{Theorem}
\newtheorem{lemma}[theorem]{Lemma}

\graphicspath{{./fig/}}
\usepackage[labelformat=simple]{subcaption} 
\newcounter{mycounter}

\begin{document}

\title{Including Node Textual Metadata in Laplacian-constrained Gaussian Graphical Models
	\thanks{This work was supported by the MASSILIA project (ANR21-CE23-0038-01) of the French National Research Agency (ANR).}
}

\author{\IEEEauthorblockN{Jianhua Wang$^{*}$$^{\dagger}$, Killian Cressant$^{\dagger}$, Pedro Braconnot Velloso$^{\dagger}$, Arnaud Breloy$^{\dagger}$}
	\IEEEauthorblockA{\textit{$^{*}$ Université Paris-Nanterre, LEME, IUT Ville d’Avray, Ville d’Avray, France} \\
		\textit{$^{\dagger}$ Conservatoire National des Arts et Métiers, CNAM, Paris, France}
	}
}

\maketitle

\thispagestyle{plain}
\pagestyle{plain}

\begin{abstract}
	This paper addresses graph learning in Gaussian Graphical Models (GGMs). In this context, data matrices often come with auxiliary metadata (e.g., textual descriptions associated with each node) that is usually ignored in traditional graph estimation processes. To fill this gap, we propose a graph learning approach based on Laplacian-constrained GGMs that jointly leverages the node signals and such metadata. The resulting formulation yields an optimization problem, for which we develop an efficient majorization-minimization (MM) algorithm with closed-form updates at each iteration. Experimental results on a real-world financial dataset demonstrate that the proposed method significantly improves graph clustering performance compared to state-of-the-art approaches that use either signals or metadata alone, thus illustrating the interest of fusing both sources of information.
\end{abstract}

\begin{IEEEkeywords}
Graph signal processing, graph learning, textual metadata, majorization-minimization 
\end{IEEEkeywords}

\section{Introduction}

Graphs are fundamental mathematical structures in both theoretical and applied sciences, providing a natural framework to model entities (nodes) and their relationships (edges).
This representation enables the development of a wide range of methodologies for classical tasks such as signal filtering, anomaly detection, clustering, and classification.
Representative examples include graph clustering methods \cite{hollocou2019modularity, wang2023overview}, graph signal processing \cite{ortega2018graph, isufi2024graph, leus2023graph}, and graph neural networks \cite{wu2020comprehensive}.

Most of the aforementioned tools, however, are built upon the assumption that the underlying graph topology is known.
In many practical scenarios, this assumption does not hold, and the graph structure must instead be inferred from data, giving rise to the problem of graph learning.
A prominent line of research in statistical learning relates the graph topology to the conditional dependency structure among variables associated with the nodes.
Within this framework, Gaussian graphical models (GGM), also referred to as Gaussian Markov random fields (GMRF), assume that the data is sampled from a multivariate Gaussian distribution and allow us to estimate the graph from the support of the precision matrix (the inverse of the covariance matrix) \cite{lauritzen1996graphical}.
Furthermore, Laplacian-constrained GGMs impose the estimated precision to be a Laplacian matrix \cite{egilmez2017graph}.
This framework bridges statistics (Gaussian models) to the field of graph signal processing\cite{shuman2013emerging}: the resulting learned graphs are interpreted as the ones who favor smooth signal representations \cite{dong2016learning, kalofolias2016learn}. 
Learned eigenvectors of the precision matrix are also linked to a graph Fourier basis \cite{ kumar2019structured, isufi2024graph}.

In many real-world applications, additional side information describing node attributes is often available.
For example, textual metadata such as variable descriptions can be appended to the data matrix \cite{lake2010discovering, ying2020nonconvex, phi2026leveraging, hippert2023learning}.
Classical graph learning methods usually discard such auxiliary information.
This could result in degraded graph estimates and limit their effectiveness in downstream tasks.
To address this limitation: 
\begin{itemize}
	\item We investigate the problem of learning a Laplacian matrix within the GMRF framework \cite{egilmez2017graph, ying2020nonconvex}, while explicitly incorporating side information in the objective function.
	
	\item We develop an efficient optimization algorithm based on the majorization--minimization (MM) principle \cite{sun2016majorization}, leading to a computationally efficient algorithm with closed-form solutions at each iteration step.
	
	\item We illustrate the interest of our method on the real-world financial dataset.
	Notably, it allows for improving the clustering performance compared to state-of-the-art approaches that use either the graph signals, or the metadata only.
	
\end{itemize}

\section{Background}

We consider an undirected, weighted graph represented by the triplet $\mathcal{G} = (\mathcal{V}, \mathcal{E}, \mathbf{W})$, where $\mathcal{V} = \{1,2,\ldots,p\}$ denotes the set of vertices (nodes), and $\mathcal{E} \subseteq \{\{u,v\} : u,v \in \mathcal{V},\, u \neq v\}$ is the edge set, i.e., a subset of all possible unordered pairs of nodes.
The matrix $\mathbf{W} \in \mathbb{R}_{+}^{p \times p}$ denotes the symmetric weighted adjacency matrix satisfying
$W_{ii} = 0$, $W_{ij} > 0$ if $\{i,j\} \in \mathcal{E}$, and $W_{ij} = 0$ otherwise.
The combinatorial graph Laplacian matrix $\mathbf{L}$ is defined as
$
	\mathbf{L} \triangleq \mathbf{D} - \mathbf{W},
$
where $\mathbf{D} \triangleq \diag(\mathbf{W}\mathbf{1})$ is the degree matrix.

In this work, we restrict our attention to estimate a combinatorial Laplacian graph matrix, where the set of Laplacian matrices associated with connected graphs can be defined as
\begin{multline}
	\mathcal{S}_L =
	\{
	\mathbf{L} \in \mathcal{S}_{+}^{p} \, \big|\,
	L_{ij} = L_{ji} \leq 0,\ \forall\, i \neq j,\\
	\mathbf{L}\mathbf{1} = \mathbf{0},
	\rank(\mathbf{L}) = p - 1
	\}
	\label{eq:laplacian_set}
\end{multline}
The objective of the sparse graph learning under the Laplacian-constrained GMRF is to estimate the precision of a Gaussian model $\mathbf{x}\sim\mathcal{N}(\mathbf{0},\mathbf{\Sigma})$
under the constraint that $\mathbf{\Sigma}^+=\mathbf{L} \in \mathcal{S}_L$.
To define a parameterization that respects this constraint, we will adopt the linear operator $\mathcal{L}(\cdot)$ from {Definition~3.2} of~\cite{ying2020nonconvex}, which maps a vector $\mathbf{w} \in \mathbb{R}^{m}$ ($m = p(p - 1)/2$) to a matrix $\mathcal{L}(\mathbf{w}) \in \mathbb{R}^{p \times p}$. 
Concretely, this operator simply maps the vectorization of the upper triangular element of adjacency matrix $\mathbf{W}$ to the corresponding Laplacian matrix.
The Laplacian set $\mathcal{S}_L$, defined in~\eqref{eq:laplacian_set}, can thus be equivalently expressed as
\begin{equation}
	\mathcal{S}_L
	=
	\left\{
	\mathcal{L}(\mathbf{w}) \in \mathcal{S}_{+}^{p} \,\big|\,
	\mathbf{w}\in (\mathbb{R}^+)^{m}
	\right\},
	\label{eq:laplacian_set_new}
\end{equation}
where the element-wise constraint $\mathbf{w} \geq \mathbf{0}$ enforces the non-negativity of all edge weights. In the following, $\mathcal{L}\mathbf{w}$ will be used in place of $\mathcal{L}(\mathbf{w})$ for notional simplicity.
In conclusion, instead of directly optimizing the Laplacian matrix $\mathbf{L}$, we will optimize the vector $\mathbf{w}$ 

\section{Problem formulation}

The data matrix is denoted by $
\mathbf{X} = [\mathbf{x}_1, \mathbf{x}_2, \ldots, \mathbf{x}_n]
\in \mathbb{R}^{p \times n}$, with $\mathbf{x}_i\in \mathbb{R}^p$ being one  observation of a graph signal on all the $p$ nodes.
The objective is to learn the non-negative graph weight vector $\mathbf{w}$ from the signals $\mathbf{X}$. 
In many practical scenarios, additional side-information associated with the graph nodes is also available. 
We denote the embedding of this side-information for node $i$ by a vector $\mathbf{y}_i \in \mathbb{R}^{d}$, where $d$ may vary depending on the embedding method.

A representative example we will use in this paper's experiments is SP500 stock dataset.
In that case node represent companies, graph signals are their stock market values, and node metadata consists in textual descriptions of each stock (e.g., the company's principal activities extracted from the official website). 
These textual metadata can then be transformed into embedding vectors $\mathbf{y}_i$ using representation learning techniques such as Word2vec \cite{johnson2024detailed}.

\paragraph{Graph learning from smooth signals}
It is commonly assumed that the observed signals are smooth over the underlying graph.
In Laplacian constrained GMRF, the data is assumed to be zero-mean Gaussian, i.e., $\mathbf{x} \sim \mathcal{N}(\mathbf{0}, \boldsymbol{\Sigma})$, in which $ \boldsymbol{\Sigma}^{+}= \mathbf{L}$ is the precision matrix.
Hence the precision matrix is directly identified to the graph Laplacian.
Sparse graph learning under the Laplacian-constrained GGM can be formulated as a penalized maximum likelihood estimation problem. 
Following \cite{ying2020nonconvex}, the objective function to be minimized is given by
\begin{equation}
	\label{prob:signal}
	f_{\mathbf{X}}^{\lambda}(\mathbf{w}) =   \tr(\mathbf{S}\mathcal{L}\mathbf{w}) - \log \det(\mathcal{L}\mathbf{w} + \mathbf{J}) + \sum_{i} h_\lambda(\mathbf{w}_{i})
\end{equation}
where $
\mathbf{S} = \frac{1}{n} \sum_{t=1}^n \mathbf{x}_t \mathbf{x}_t^{\top}
$ denotes the sample covariance matrix and $h_\lambda(\cdot)$ is a 
sparsity promoting penalty, with regularization parameter $\lambda \geq 0$.
As advocated in \cite{ying2020nonconvex}, we adopt the non-convex penalty SCAD \cite{fan2001variable}.
The matrix $\mathbf{J} = \tfrac{1}{p}\mathbf{1}\mathbf{1}^\top$ ensures that the log-det term is well defined~\cite{egilmez2017graph}.

\paragraph{Gaussian kernel graph from side information}
In contrast to the observed signals, the side-information vectors $\mathbf{y}_i$ are generally not smooth over the graph. 
For instance, textual descriptions transformed into embeddings do not necessarily satisfy a smoothness assumption. 
However, a spatial graph is often of relevance for such embeddings.
This can be obtained from the Gaussian kernel similarities, leading to the adjacency matrix:
\begin{equation}
	\label{eq:Gaussian_kernal_simi}
	\mathbf{W}_{i,j} = \exp\left(-\frac{\mathbf{Z}_{i,j}}{\sigma^2} \right),
\end{equation}
with $\mathbf{Z}_{i,j} = \lVert \mathbf{y}_i - \mathbf{y}_j \rVert_2^2$ encoding semantic distances between node features, and where $\sigma^2$ allows for adjusting the chosen width of a spatial neighborhood.

In fact, the Gaussian kernel similarity in \eqref{eq:Gaussian_kernal_simi} can be interpreted as the closed-form solution of a more general graph weight optimization problem proposed in \cite{kalofolias2016learn}. 
The corresponding objective function is written as
\begin{equation}
	\label{prob:label:Y_gaussian_kerkel}
	f_{\mathbf{Y}}^{\sigma}(\mathbf{w})  =  \mathbf{w}^\top \mathbf{z} + \sigma^2 \sum_{i} \mathbf{w}_{i} \left(\log(\mathbf{w}_{i}) -1\right),
\end{equation}
where $\mathbf{w}$ is constrained to have positive entries, and where $\mathbf{z} \in \mathbb{R}^{m}_{+}$ collects the upper-triangular elements of $\mathbf{Z}$.

\paragraph{Learning from smooth signals with side-information}

To interpolate between the two aforementioned approaches, we propose to jointly integrate signal observations and side information within a unified optimization framework.
The resulting objective function of the graph learning problem is formulated as
\begin{equation}
	\label{prob:whole:init}
	f(\mathbf{w}) = \alpha
	f_{\mathbf{X}}^{\lambda}(\mathbf{w})  +  (1-\alpha)f_{\mathbf{Y}}^{\sigma}(\mathbf{w})
\end{equation}
where the hyperparameter $\alpha \in [0,1]$ controls the relative confidence assigned to the signal-driven and side-information-driven terms.
For analytical convenience, we rewrite the objective function as
\begin{equation}
			f(\mathbf{w}) = {}   \alpha {f_1(\mathbf{w})} + (1-\alpha) {f_2(\mathbf{w})} + \alpha {f_3(\mathbf{w})} 
\end{equation}
in which we define
\begin{equation}
	\label{prob:whole:init_trans}
		\begin{split}	
			  {} & f_1(\mathbf{w})  \triangleq   - \log \det(\mathcal{L}\mathbf{w} + \mathbf{J}) + \tr(\mathbf{S}\mathcal{L}\mathbf{w})  \\
			{} & f_2(\mathbf{w})  \triangleq  \mathbf{w}^\top \mathbf{z} + \sigma^2 \sum_{i} \mathbf{w}_{i} \left(\log(\mathbf{w}_{i}) -1\right)  \\
			 {} & f_3(\mathbf{w})  \triangleq  \sum_{i} h_\lambda(\mathbf{w}_{i}).
		\end{split}
\end{equation}

\section{Proposed algorithm}

To solve the problem, we adopt the majorization-minimization (MM) framework~\cite{sun2016majorization}, which iteratively minimizes a sequence of surrogate functions.
Each MM iteration consists of two steps.
In the majorization step, a surrogate function
$f(\mathbf{w}\,|\,\mathbf{w}^{(k)})$ is constructed to locally upper-bound the original objective $F(\mathbf{w})$ at the current iterate $\mathbf{w}^{(k)}$, such that
\begin{equation}
	\label{eq:mm_major}
	f(\mathbf{w}\,|\,\mathbf{w}^{(k)}) \geq F(\mathbf{w}), \quad
	f(\mathbf{w}^{(k)}\,|\,\mathbf{w}^{(k)}) = F(\mathbf{w}^{(k)}).
\end{equation}
In the minimization step, the surrogate function
$f(\mathbf{w}\,|\,\mathbf{w}^{(k)})$ is minimized to produce the next iterate $\mathbf{w}^{(k+1)}$
\begin{equation}
	\label{eq:mm_minim}
	\mathbf{w}^{(k+1)} = \arg \min_{\mathbf{w}} 	f(\mathbf{w}\,|\,\mathbf{w}^{(k)}).
\end{equation}
From~\eqref{eq:mm_major} and~\eqref{eq:mm_minim}, it immediately follows that
\begin{equation}
	F(\mathbf{w}^{(k+1)})
	\leq
	f(\mathbf{w}^{(k+1)}\,|\,\mathbf{w}^{(k)})
	\leq
	f(\mathbf{w}^{(k)} \mid \mathbf{w}^{(k)})
	=
	F(\mathbf{w}^{(k)}),
	\label{eq:mm_descent}
\end{equation}
which implies that the sequence $\{F(\mathbf{w}^{(k)})\}_{k \geq 0}$ generated by the MM algorithm is non-increasing.

In the following, we derive an MM algorithm to minimize the objective in \eqref{prob:whole:init_trans}.
The key step is to construct a tight surrogate function for each component of the objective.
To this end, we first establish the following lemmas.

\begin{lemma}\label{lemma:1}
	The function $f_1(\mathbf{w}) $ in \eqref{prob:whole:init_trans} admits the following upper bound at a given point $\mathbf{w}_0$
	\begin{equation}
		\label{eq:lemma:f1}
		f_1(\mathbf{w}) \le  \tr(\mathbf{R}\diag(\mathbf{w})) +  \tr\left(\mathbf{Q} \diag(\tilde{\mathbf{w}})^{-1} \right) 
	\end{equation}
	with equality for $\mathbf{w} = \mathbf{w}_0$, where
	\begin{equation}
		\label{eq:update_Q}
		\mathbf{Q} = \diag(\tilde{\mathbf{w}}_0) \mathbf{G}^\top (\mathbf{G} \diag(\tilde{\mathbf{w}}_0) \mathbf{G}^\top)^{-1} \mathbf{G} \diag(\tilde{\mathbf{w}}_0),
	\end{equation}
	and $\mathbf{R} = \mathbf{E}^\top \mathbf{S} \mathbf{E}$ with $\tilde{\mathbf{w}}_0 \triangleq [\mathbf{w}_0^\top, 1/p]^\top$, $\tilde{\mathbf{w}} = [\mathbf{w}^\top, 1/p]^\top\in \mathbb{R}^{m+1}$ and $\mathbf{G} = [\mathbf{E}, \mathbf{1}]\in \mathbb{R}^{p \times (m+1)}$. The matrix $\mathbf{E} = [\boldsymbol{\xi}_1, \ldots, \boldsymbol{\xi}_m] \in \mathbb{R}^{p \times m}$ consists of column vectors $\boldsymbol{\xi}_k = \mathbf{e}_{i,k} - \mathbf{e}_{j,k}$ corresponding to the edge $(i, j)$ with $i > j$ and $\mathbf{e}_{i,k} $ is the $i$-th canonical basis vector. The index $k$ maps the edge to the column of $\mathbf{E}$ via 
	$ k = i - j + \frac{j-1}{2}(2p - j)	$. 
\end{lemma}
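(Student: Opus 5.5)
The plan is to rewrite $f_1$ so that both terms are expressed through the diagonal matrix $\mathbf{D} \triangleq \diag(\tilde{\mathbf{w}})$, and then majorize the log-det term in two stages: first by linearizing a concave function, then by a variational bound on a matrix inverse.

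\textbf{Step 1 (factorization).} By construction of $\mathcal{L}$ and $\mathbf{E}$, each edge $k=(i,j)$ contributes $w_k \boldsymbol{\xi}_k\boldsymbol{\xi}_k^\top$, so
\begin{equation*}
\mathcal{L}\mathbf{w} = \mathbf{E}\diag(\mathbf{w})\mathbf{E}^\top .
\end{equation*}
Also $\mathbf{J} = \tfrac1p \mathbf{1}\mathbf{1}^\top$, and therefore
\begin{equation*}
\mathbf{M} \triangleq \mathcal{L}\mathbf{w}+\mathbf{J} = \mathbf{G}\,\diag(\tilde{\mathbf{w}})\,\mathbf{G}^\top = \mathbf{G}\mathbf{D}\mathbf{G}^\top .
\end{equation*}
The trace term is immediate:
\begin{equation*}
\tr(\mathbf{S}\mathcal{L}\mathbf{w}) = \tr(\mathbf{E}^\top\mathbf{S}\mathbf{E}\diag(\mathbf{w})) = \tr(\mathbf{R}\diag(\mathbf{w})).
\end{equation*}
This term is linear in $\mathbf{w}$, so it is kept as is. The remaining work concerns $-\log\det(\mathbf{G}\mathbf{D}\mathbf{G}^\top)$. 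Here $\mathbf{D}_0=\diag(\tilde{\mathbf{w}}_0)$ and $\mathbf{M}_0=\mathbf{G}\mathbf{D}_0\mathbf{G}^\top$, which is invertible for a connected graph.

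\textbf{Step 2 (linearize in the inverse).} I would view $-\log\det\mathbf{M} = \log\det(\mathbf{M}^{-1})$ as a concave function of $\mathbf{M}^{-1}$. Its tangent at $\mathbf{M}_0^{-1}$ gives
\begin{equation*}
-\log\det\mathbf{M} \le -\log\det\mathbf{M}_0 + \tr(\mathbf{M}_0\mathbf{M}^{-1}) - p,
\end{equation*}
with equality when $\mathbf{M}=\mathbf{M}_0$. It then suffices to bound $\tr(\mathbf{M}_0\mathbf{M}^{-1})$ by something linear in $\mathbf{D}^{-1}$.

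\textbf{Step 3 (variational bound on $\mathbf{M}^{-1}$; the key step).} For any $\mathbf{Y}\in\mathbb{R}^{(m+1)\times p}$ with $\mathbf{G}\mathbf{Y}=\mathbf{I}_p$, I claim
\begin{equation*}
\mathbf{Y}^\top\mathbf{D}^{-1}\mathbf{Y} \succeq (\mathbf{G}\mathbf{D}\mathbf{G}^\top)^{-1}.
\end{equation*}
To prove it, write $\mathbf{Y} = \mathbf{D}\mathbf{G}^\top\mathbf{M}^{-1} + \boldsymbol{\Delta}$ with $\mathbf{G}\boldsymbol{\Delta}=\mathbf{0}$ and expand. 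The cross terms are $\mathbf{M}^{-1}\mathbf{G}\boldsymbol{\Delta}=\mathbf{0}$, which leaves $\mathbf{M}^{-1} + \boldsymbol{\Delta}^\top\mathbf{D}^{-1}\boldsymbol{\Delta}\succeq \mathbf{M}^{-1}$ (this uses $\mathbf{D}\succ 0$, i.e.\ positive weights). Choosing the minimizer at the current point, $\mathbf{Y}_0=\mathbf{D}_0\mathbf{G}^\top\mathbf{M}_0^{-1}$, gives
\begin{equation*}
\tr(\mathbf{M}_0\mathbf{M}^{-1}) \le \tr(\mathbf{M}_0\mathbf{Y}_0^\top\mathbf{D}^{-1}\mathbf{Y}_0) = \tr\big(\mathbf{D}_0\mathbf{G}^\top\mathbf{M}_0^{-1}\mathbf{G}\mathbf{D}_0\,\mathbf{D}^{-1}\big) = \tr(\mathbf{Q}\diag(\tilde{\mathbf{w}})^{-1}),
\end{equation*}
which is exactly the $\mathbf{Q}$ of \eqref{eq:update_Q}. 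Equality holds at $\mathbf{D}=\mathbf{D}_0$, where $\boldsymbol{\Delta}=\mathbf{0}$.

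\textbf{Step 4 (assemble and check tightness).} Combining Steps 1–3 gives \eqref{eq:lemma:f1} up to the additive constant $-\log\det\mathbf{M}_0 - p$. This constant does not depend on $\mathbf{w}$, so it is irrelevant for the MM update, and I would state that the inequality is understood modulo it. As a check of tightness at $\mathbf{w}_0$,
\begin{equation*}
\tr(\mathbf{Q}\mathbf{D}_0^{-1}) = \tr(\mathbf{G}^\top\mathbf{M}_0^{-1}\mathbf{G}\mathbf{D}_0) = \tr(\mathbf{I}_p) = p,
\end{equation*}
so both inequalities in Steps 2 and 3 are equalities at $\mathbf{w}=\mathbf{w}_0$.

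I expect Step 3 to be the main obstacle, since a naive tangent bound on the convex function $-\log\det(\mathbf{G}\mathbf{D}\mathbf{G}^\top)$ goes the wrong way. The fix is to pass through $\mathbf{M}^{-1}$ and use the constrained least-squares characterization of $(\mathbf{G}\mathbf{D}\mathbf{G}^\top)^{-1}$.
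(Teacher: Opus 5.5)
Your proof is correct and follows essentially the same route as the paper. Both use concavity of $\log\det$ in the inverse, the factorization $\mathcal{L}\mathbf{w}+\mathbf{J}=\mathbf{G}\diag(\tilde{\mathbf{w}})\mathbf{G}^\top$, and the matrix-inverse majorizer $\mathbf{M}^{-1}\preceq \mathbf{M}_0^{-1}\mathbf{G}\mathbf{D}_0\mathbf{D}^{-1}\mathbf{D}_0\mathbf{G}^\top\mathbf{M}_0^{-1}$; the paper cites that majorizer from Example~19 of \cite{sun2016majorization}, while your Step~3 proves it directly through the decomposition $\mathbf{Y}=\mathbf{D}\mathbf{G}^\top\mathbf{M}^{-1}+\boldsymbol{\Delta}$. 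You are also right that the bound and its tightness at $\mathbf{w}_0$ hold only up to the additive constant $-\log\det\mathbf{M}_0-p$, which the paper absorbs into ``const.'' while the lemma statement omits it.
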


\begin{proof} 
	The proof of this lemma comes from \cite{javaheri2024joint}, and is reported to introduce some required matrix notations.
	The log-determinant function is concave over the cone of positive definite matrices and therefore satisfies the following first-order upper-bound inequality
	\begin{equation}
		\log \det (\mathbf{X}) \leq \log \det (\mathbf{X}_0) + \tr \!\left( \mathbf{X}_0^{-1} (\mathbf{X} - \mathbf{X}_0) \right),
		\label{eq:logdet_concavity}
	\end{equation}
	for any $\mathbf{X} \succ 0$ and any reference point $\mathbf{X}_0 \succ 0$~\cite{boyd2004convex}.
	Applying this inequality yields a majorizer for the negative log-determinant term 
	\begin{align}
		- \log \det (\mathcal{L}\mathbf{w} + \mathbf{J})
		&= \log \det \left( (\mathcal{L}\mathbf{w} + \mathbf{J})^{-1} \right) \nonumber \\
		&\leq \tr \left( \mathbf{F}_0 (\mathcal{L}\mathbf{w} + \mathbf{J})^{-1} \right) + \text{const.},
		\label{eq:inverse_logdet_majorization}
	\end{align}
	where $\mathbf{F}_0 = \mathcal{L}\mathbf{w}_0 + \mathbf{J}$.

	Using the Laplacian factorization~\cite{kumar2020unified},
	\begin{equation}
		\mathcal{L}\mathbf{w} + \mathbf{J}
		= \mathbf{E} \diag(\mathbf{w}) \mathbf{E}^\top + \mathbf{J}
		= \mathbf{G} \diag(\tilde{\mathbf{w}}) \mathbf{G}^\top,
		\label{eq:laplacian_factorization}
	\end{equation}
	and the cyclic property of the trace, we obtain
	\begin{equation}
		\tr(\mathbf{S}\mathcal{L}\mathbf{w}) = 	\tr(\mathbf{S}\mathbf{E} \diag(\mathbf{w}) \mathbf{E}^\top) = \tr(\mathbf{R}\diag(\mathbf{w}))
	\end{equation}
	Applying \textbf{Example~19} in~\cite{sun2016majorization} for $\tilde{\mathbf{w}} \ge \bm{0}$, yields
	\begin{multline}
		\left( \mathbf{G} \diag (\tilde{\mathbf{w}}) \mathbf{G}^\top \right)^{-1} \\ 
		\le \mathbf{G}_0^{-1} \mathbf{G} \diag (\tilde{\mathbf{w}}_0) \diag (\tilde{\mathbf{w}})^{-1} \diag (\tilde{\mathbf{w}}_0) \mathbf{G}^\top \mathbf{G}_0^{-1} 
	\end{multline}
	where $\mathbf{G}_0 \triangleq \mathbf{G} \diag(\tilde{\mathbf{w}}_0) \mathbf{G}^\top$.
	Then, applying the cyclic property of the trace, we have
	\begin{equation}
		\tr\left(\mathbf{G}_0 \left( \mathbf{G} \diag (\tilde{\mathbf{w}}) \mathbf{G}^\top \right)^{-1} \right) \le 	\tr\left(\mathbf{Q} \diag(\tilde{\mathbf{w}})^{-1} \right)
	\end{equation}
	with $\mathbf{Q}$ as defined in \eqref{eq:lemma:f1}. Combining these results completes the proof, where the majorizing function for ${f_1(\mathbf{w})}$ is
	\begin{equation}
		\label{eq:f1:major}
		F_1(\mathbf{w}| \tilde{\mathbf{w}}_0) \triangleq  \tr(\mathbf{R}\diag(\mathbf{w})) +  \tr\left(\mathbf{Q} \diag(\tilde{\mathbf{w}})^{-1} \right) 
	\end{equation}
\end{proof}

\begin{lemma}\label{lemma:2}
	The function $f_2(\mathbf{w}) $ in \eqref{prob:whole:init_trans} admits the following upper bound at a given point $\mathbf{w}_0$
	\begin{equation}
		\label{eq:lemma:f2}
		f_2(\mathbf{w}) \le  \mathbf{w}^\top \mathbf{z} + \sigma^2 \sum_{i=1}^{m} \left( \frac{1}{\mathbf{w}_{i0}} \mathbf{w}_{i}^2 + (\log \mathbf{w}_{i0} - 2)\, \mathbf{w}_{i} \right)
	\end{equation}
	with equality for $\mathbf{w} = \mathbf{w}_0$, with $ \mathbf{w}_{i0}$ denoting the $i$-th element.
\end{lemma}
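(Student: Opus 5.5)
The bound is separable across coordinates, and the linear term $\mathbf{w}^\top\mathbf{z}$ appears identically on both sides. Since $\sigma^2 \ge 0$, it therefore suffices to prove a scalar inequality. For each $i$, with $w \triangleq \mathbf{w}_i > 0$ and $w_0 \triangleq \mathbf{w}_{i0} > 0$, I will show
\begin{equation*}
	w(\log w - 1) \le \frac{w^2}{w_0} + (\log w_0 - 2)\, w ,
\end{equation*}
with equality at $w = w_0$. Summing over $i = 1,\ldots,m$, multiplying by $\sigma^2$ and adding $\mathbf{w}^\top\mathbf{z}$ then gives \eqref{eq:lemma:f2}.

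The plan is to majorize the concave logarithm by its tangent line, in the same spirit as the concavity bound \eqref{eq:logdet_concavity} used for Lemma~\ref{lemma:1}. The key step is
\begin{equation*}
	\log w \le \log w_0 + \frac{w - w_0}{w_0} = \log w_0 + \frac{w}{w_0} - 1,
\end{equation*}
which holds for all $w, w_0 > 0$, with equality iff $w = w_0$. Because $w > 0$, I can multiply both sides by $w$ without changing the direction of the inequality, giving $w\log w \le \frac{w^2}{w_0} + (\log w_0 - 1)w$. Subtracting $w$ from both sides yields exactly the scalar inequality above. The equality case is immediate by substituting $w = w_0$: both sides equal $w_0\log w_0 - w_0$.

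There is no real obstacle here. The only point needing care is the sign of the multiplier $w$. The inequality must be multiplied by $w \ge 0$, which is fine because $f_2$ is only defined, through $\log \mathbf{w}_i$, on the positive orthant, and the boundary case $w = 0$ holds by continuity under the convention $0\log 0 = 0$. The reference point must also satisfy $\mathbf{w}_{i0} > 0$ for the surrogate to be defined. Alternatively, the same result follows from the elementary inequality $\log t \le t - 1$ with $t = w/w_0$, which I would mention as a one-line check. Finally, I would note that the resulting surrogate is a separable quadratic in $\mathbf{w}$, which is what later allows closed-form updates.
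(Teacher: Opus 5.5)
Your proposal is correct and matches the paper's proof: you bound the concave logarithm by its tangent line at $\mathbf{w}_{i0}$, multiply by $\mathbf{w}_i>0$, and subtract $\mathbf{w}_i$, which shifts the constant from $-1$ to $-2$, before summing over coordinates. Your remarks on requiring $\mathbf{w}_{i0}>0$ and on the boundary case $\mathbf{w}_i=0$ are valid points that the paper leaves implicit.
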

\begin{proof}
	The logarithm function is concave, so for any $x, w_0 > 0$ we have $\log x \;\le\; \log w_0 + \frac{1}{w_0}(x - w_0)$. 
	Multiplying both sides by $x > 0$ gives
	\[
	x \log x 
	\;\le\; 
	x \log w_0 + \frac{x(x - w_0)}{w_0}
	= \frac{1}{w_0} x^2 + (\log w_0 - 1)x,
	\]
	Thus, we have $
		\mathbf{w}_{i} \log(\mathbf{w}_{i}) \le \frac{1}{\mathbf{w}_{i0}} \mathbf{w}_{i}^2 + (\log \mathbf{w}_{i0} - 1)\, \mathbf{w}_{i} $, which can directly give the majorizing function for ${f_2(\mathbf{w})}$
	\begin{multline}
		F_2(\mathbf{w}| \tilde{\mathbf{w}}_0) \triangleq  \mathbf{w}^\top \mathbf{z} \\ + \sigma^2 \sum_{i=1}^{m} \left( \frac{1}{\mathbf{w}_{i0}} \mathbf{w}_{i}^2 + (\log \mathbf{w}_{i0} - 2)\, \mathbf{w}_{i} \right)
	\end{multline} 
\end{proof}

Lastly, the penalty term $f_3(\mathbf{w})$ is majorized thank to the concavity of  the SCAD penalty $h_\lambda$: it allows us to construct a linear majorizer via first-order Taylor approximation
\begin{equation}
\label{eq:maj_f3}
	f_3(\mathbf{w}) \le
	F_3(\mathbf{w}|\tilde{\mathbf{w}}_0)
	\triangleq
	\sum_{i} h'_{\lambda}(\mathbf{w}_{i0})\, \mathbf{w}_{i}
	+ \text{const.}
\end{equation}
with equality for $\mathbf{w} = \mathbf{w}_0$.

Combining Lemma~\ref{lemma:1}, Lemma~\ref{lemma:2} and the term \eqref{eq:maj_f3} leads to the following surrogate objective function
\begin{equation}
	\label{eq:optim}
	F(\mathbf{w}^{(k)}) = \alpha F_1(\mathbf{w}| \tilde{\mathbf{w}}_0)  + (1-\alpha) F_2(\mathbf{w}| \tilde{\mathbf{w}}_0)  + \alpha F_3(\mathbf{w}| \tilde{\mathbf{w}}_0)  \\
\end{equation}
Minimizing $F(\mathbf{w}^{(k)})$ yields the update $\mathbf{w}^{(k+1)}$ by solving decoupled minimization problems over each component of $\mathbf{w}$.
Specifically, the stationary condition for the $i$-th component is given by
\begin{multline}
	0 = \frac{\partial F(\mathbf{w}^{(k)}) }{\partial \mathbf{w}_i} = \alpha \left(\mathbf{R}_i - \frac{\mathbf{Q}_i}{ \mathbf{w}_{i}^2} +  h'_{\lambda}\left(\mathbf{w}_{i0} \right) \right) + \\ (1-\alpha) \left(\mathbf{z}_{i} + \sigma^2 \left( \frac{2}{ \mathbf{w}_{i0}}\mathbf{w}_{i} + \log \mathbf{w}_{i0} - 2\right) \right)
\end{multline}
where \(\mathbf{R}_i = [\diag(\mathbf{R})]_i\) and \(\mathbf{Q}_i = [\diag(\mathbf{Q}_m)]_i\) with $\mathbf{Q}_{m}$ denoting the leading $m \times m$ principal submatrix of $\mathbf{Q}$, i.e., $\mathbf{Q}_{m} = \mathbf{Q}_{1:m, 1:m}$.
Then, defining
\begin{gather}
	a_i \triangleq (1-\alpha)\frac{2\sigma^2}{\mathbf{w}_{i0}}, \label{eq:update_a_i}  \\ 
	C_i \triangleq \alpha\big(\mathbf{R}_i + h'_{\lambda}(\mathbf{w}_{i0})\big) + (1-\alpha)\big(\mathbf{z}_i + \sigma^2(\log \mathbf{w}_{i0} - 2)\big), \label{eq:update_C_i}
\end{gather}
leads to the cubic equation
\begin{equation}
	a_i \mathbf{w}_i^3 + C_i \mathbf{w}_i^2 - \alpha \mathbf{Q}_i = 0.
	\label{prob:cubic}
\end{equation}
Equation \eqref{prob:cubic} is solved by computing the eigenvalues of the companion matrix~\cite{horn2012matrix}, and retaining the positive real root. The complete MM procedure is summarized in Algorithm~\ref{alg:MM}.

\begin{algorithm}[!h]
	\SetAlgoLined
	\SetKw{KwStop}{Stop}
	\SetKwInput{KwInit}{Init}
	\caption{MM algorithm for problem~\eqref{eq:optim}}
	\label{alg:MM}
	
	\KwIn{
		$\mathbf{R}$, $\mathbf{S}$, initial vector $\mathbf{w}^{(0)} = \bm{1}$, $\epsilon$, $\alpha$, $\sigma^2$, $\lambda$
	}
	
	\KwInit{$k \leftarrow 0$}
	
	\While{$k \le \text{maxiter}$}{
		\tcp{Majorization step at $\mathbf{w}^{(k)}$}
		Update $\mathbf{Q}$ using \eqref{eq:update_Q} evaluated at $\mathbf{w}^{(k)}$ \;
		
		\tcp{Minimization step}
		\For{each component $i = 1, \ldots, m$}{
			Compute $a_i$ and $C_i$ using \eqref{eq:update_a_i} and \eqref{eq:update_C_i} \;
			Solve the cubic equation \eqref{prob:cubic} and set $\mathbf{w}_i^{(k+1)}$ to its positive real root \;
		}
		
		Update $\mathbf{w}^{(k+1)} = [\mathbf{w}_1^{(k+1)}, \ldots, \mathbf{w}_m^{(k+1)}]^\top$ \;
		
		\uIf{$\| \mathbf{w}^{(k+1)} - \mathbf{w}^{(k)} \|_F \le \epsilon$}{
			\KwStop
		}
		
		$k \leftarrow k + 1$ \;
	}
\end{algorithm}

\section{Experimental results}

In this section, we conduct experiments on real-world financial data to evaluate the performance of our algorithm.
The dataset consists of $p=30$ stocks selected from the S\&P~500 index over the period ranging from January~2018 to July~2018.
This time span yields $n = 200$ daily observations for each stock. The data are represented by a matrix of log-returns, defined entrywise as
$
	\mathbf{X}_{i,j} = \log P_{i,j} - \log P_{i,j-1},
	\label{eq:log_returns}
$
where $P_{i,j}$ denotes the closing price of the $i$-th stock on day $j$.
The $p=30$ stocks are grouped into $3$ sectors : \emph{Utilities}, \emph{Materials}, and \emph{Health Care},
The ground-truth sector labels are obtained according to the Global Industry Classification Standard (GICS). 

In addition to price-based information, we incorporate stock metadata to enrich the similarity structure.
The metadata associated with the selected stocks are retrieved using the \texttt{yfinance} package\footnote{https://ranaroussi.github.io/yfinance/}.
Specifically, we extract the \texttt{description} field for each stock and employ \texttt{Sentence-BERT} \cite{reimers-2019-sentence-bert} to embed the textual descriptions.
Pairwise distances between the resulting metadata representations are then computed and used as side information, denoted by the matrix $\mathbf{Z}$, in the graph learning process.

We evaluate the performance of the proposed method on a stock clustering task, with the objective of recovering the three underlying market sectors.
Clustering performance is assessed using standard evaluation metrics, including modularity (MOD) \cite{newman2006modularity}. 
Additionally, edge detection performance is evaluated through the F-score (FS), defined as
\begin{equation}
	\mathrm{FS} = \frac{2\,\mathrm{tp}}{2\,\mathrm{tp} + \mathrm{fp} + \mathrm{fn}},
	\label{eq:metrics}
\end{equation}
where $\mathrm{tp}$ represents the number of true positives (correctly identified edges), $\mathrm{fp}$ denotes the number of false positives (incorrectly identified edges), and $\mathrm{fn}$ denotes the number of false negatives (missed true edges). 
The F-score takes values in the interval $[0,1]$, with $\mathrm{FS}=1$ indicating perfect recovery of the underlying graph structure.
In this evaluation, the ground-truth graph is assumed to contain edges only between stocks belonging to the same sector.

We first perform cross-validation over a coarse grid of $20$ values of $\lambda$ ranging from $0.1$ to $10$ using only signal-based information.
The value of $\lambda$ yielding the best clustering performance is then set fixed, and not further refined for the rest of the experiments.
In the following, we rather focus on studying the effect of the fusion parameter $\alpha$.

As illustrated in Fig.~\ref{fig:SP500_fusion_scad_result}, we analyze the graph estimated from stock log-returns (price-based information) and from side-information, as well as their jointly optimization, corresponding to $\alpha \in (0,1)$.
When only price data is used (i.e, $\alpha=1$), the method reduces to a purely signal-driven graph learning approach, which is representative of most existing Laplacian-constrained graph learning algorithms \cite{ying2020nonconvex}.
In this case, the \emph{Utilities} cluster (shown in green) is reasonably well separated, whereas the separation between the \emph{Health Care} (red) and \emph{Materials} (blue) sectors remains ambiguous.

By incorporating the side-information (i.e., $\alpha<1$), a clearer separation between the \emph{Health Care} and \emph{Materials} clusters emerges, demonstrating the benefit of fusing heterogeneous sources of information.

However, when relying solely on metadata (the case $\alpha=0$), the resulting clustering becomes less meaningful, as shown in Fig.~\ref{fig:SP500_fusion_scad_result}(a).
This observation highlights that neither price-based data nor metadata alone is sufficient to fully capture the underlying sector structure.

\begin{figure*}[tbp]
	\centering
	\begin{subfigure}{0.33\textwidth}
		\includegraphics[width=.8\textwidth]{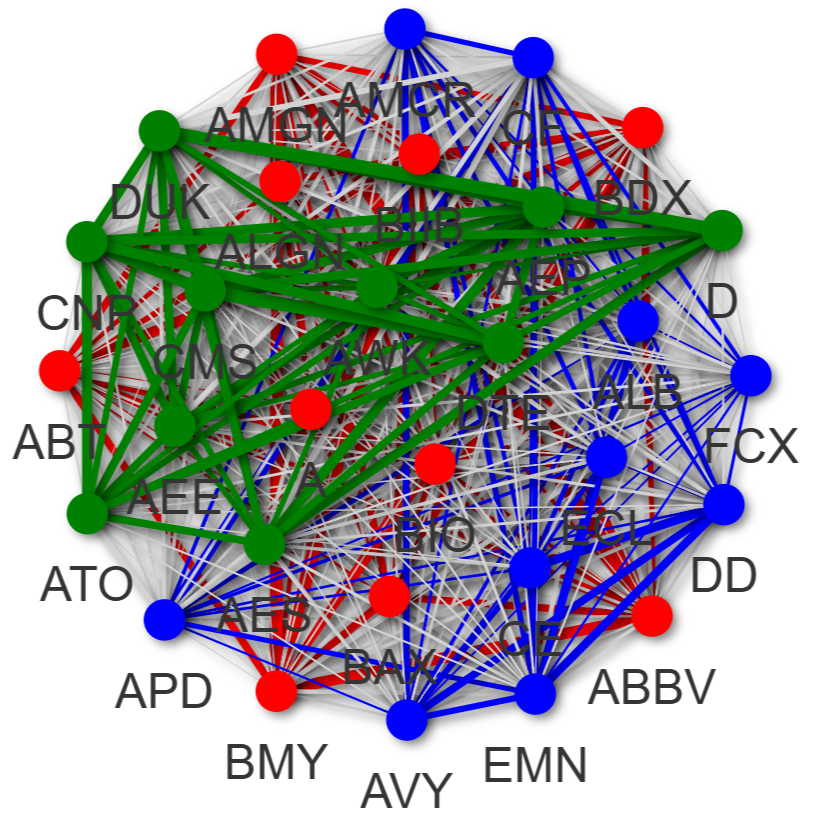}
		\caption{$\alpha=0$}
		\label{fig:SP500_fusion_scad_alpha_0}
	\end{subfigure}%
	\begin{subfigure}{0.33\textwidth}
		\includegraphics[width=\textwidth]{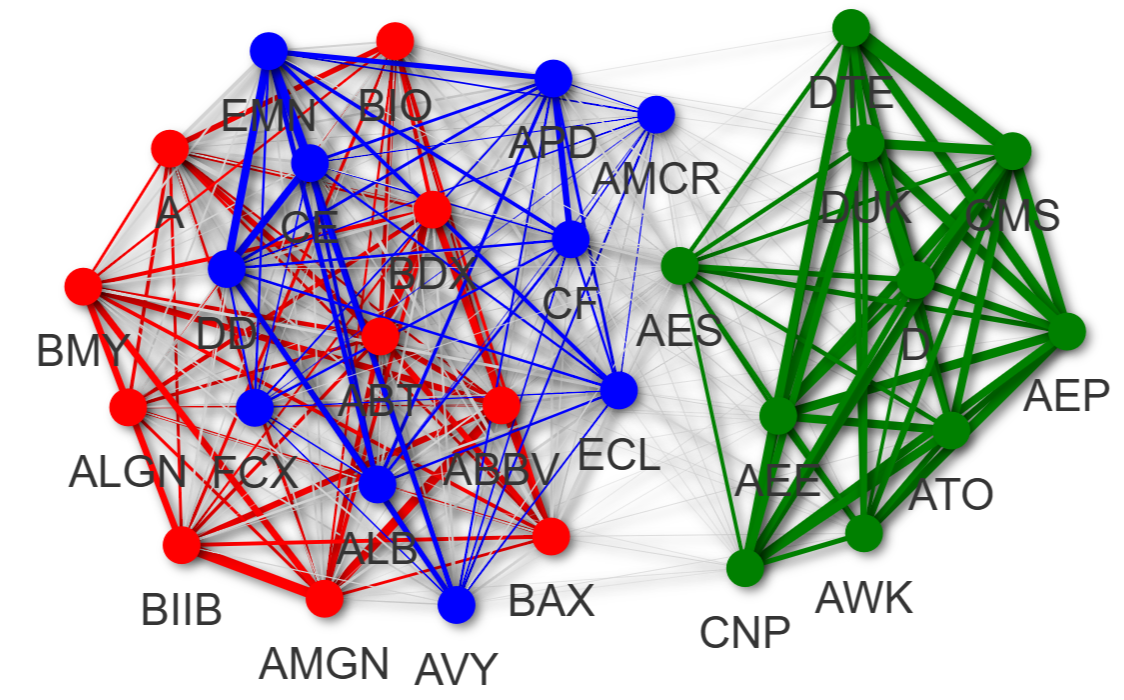}
		\caption{$\alpha=0.4$}
		\label{fig:SP500_fusion_scad_alpha_0.4}
	\end{subfigure}
	\begin{subfigure}{0.33\textwidth}
		\includegraphics[width=.85\textwidth]{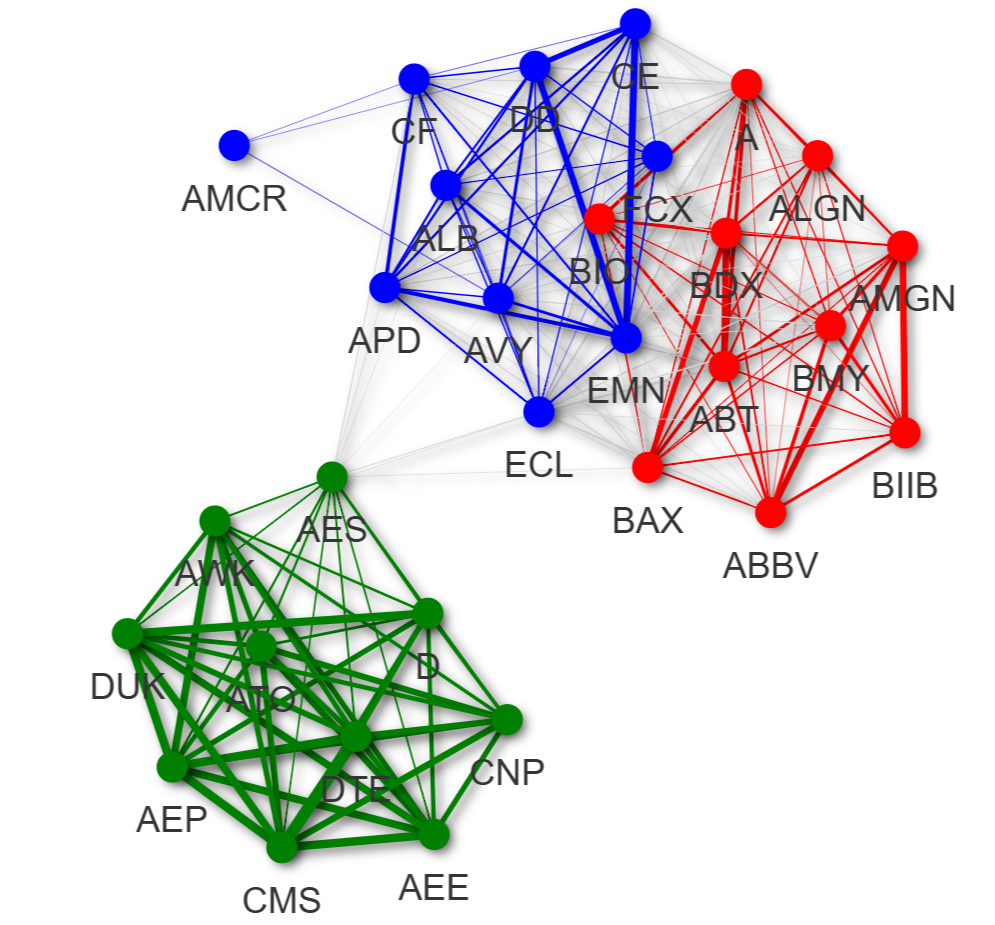}
		\caption{$\alpha=0.6$}
		\label{fig:SP500_fusion_scad_alpha_0.6}
	\end{subfigure}\\
	\begin{subfigure}{0.33\textwidth}
		\includegraphics[width=.85\textwidth]{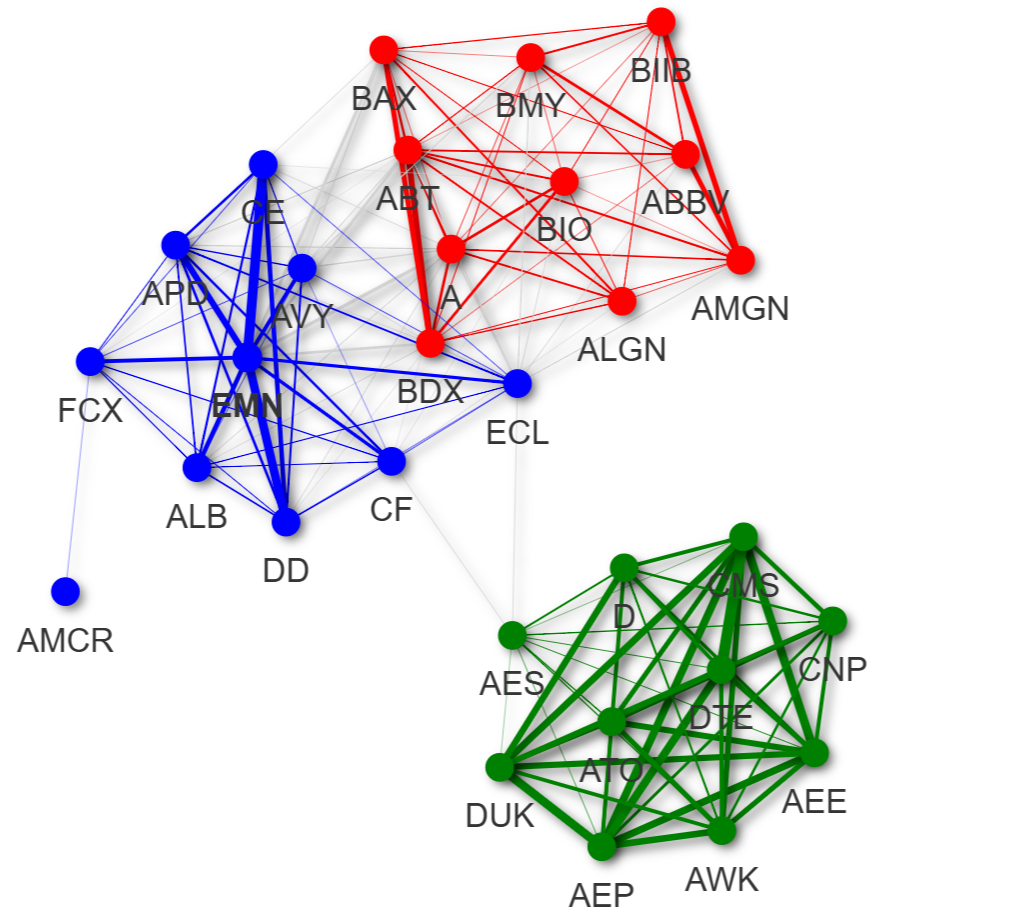}
		\caption{$\alpha=0.8$}
		\label{fig:SP500_fusion_scad_alpha_0.8}
	\end{subfigure}%
	\begin{subfigure}{0.33\textwidth}
		\includegraphics[width=.85\textwidth]{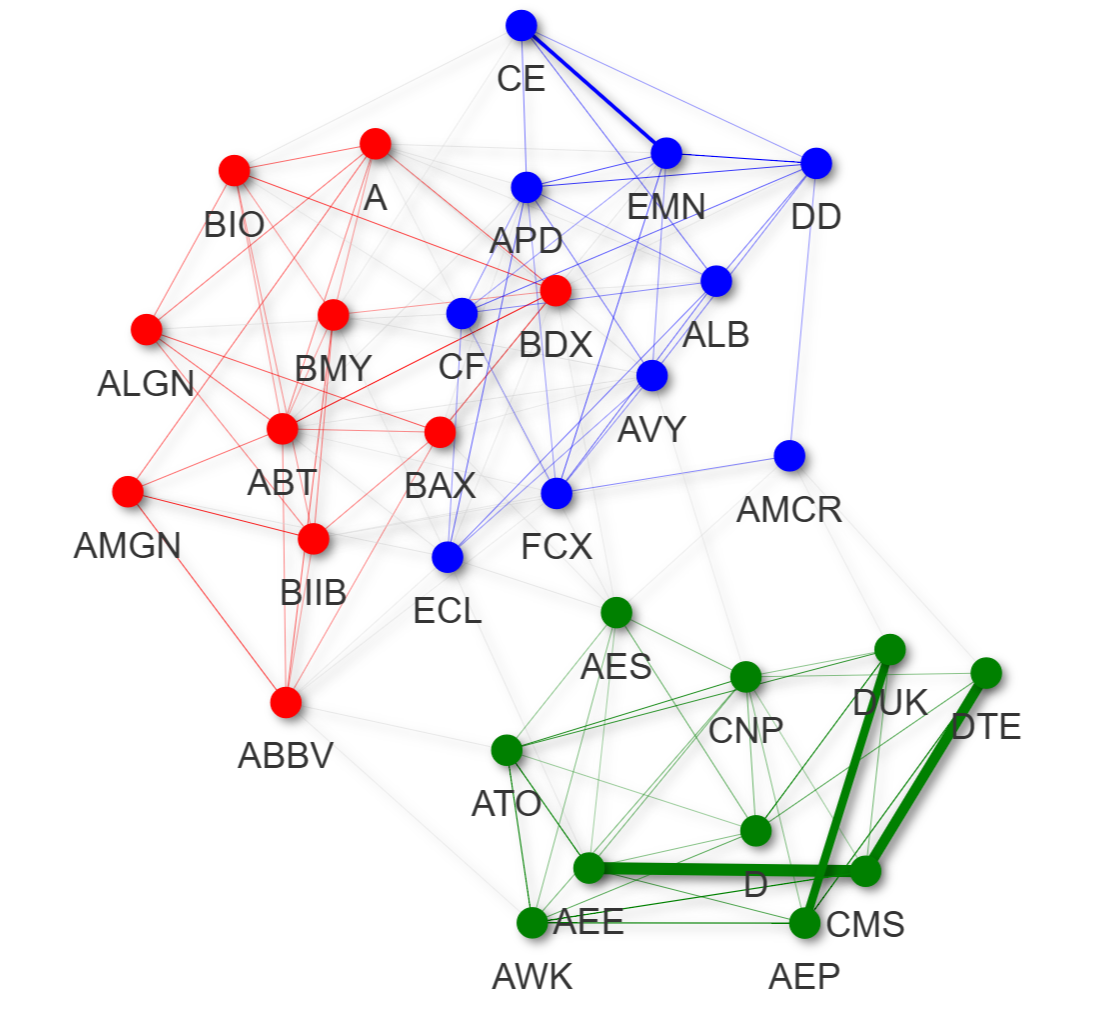}
		\caption{$\alpha=1$}
		\label{fig:SP500_fusion_scad_alpha_1}
	\end{subfigure}%
	\begin{subfigure}{0.33\textwidth}
		\includegraphics[width=\textwidth]{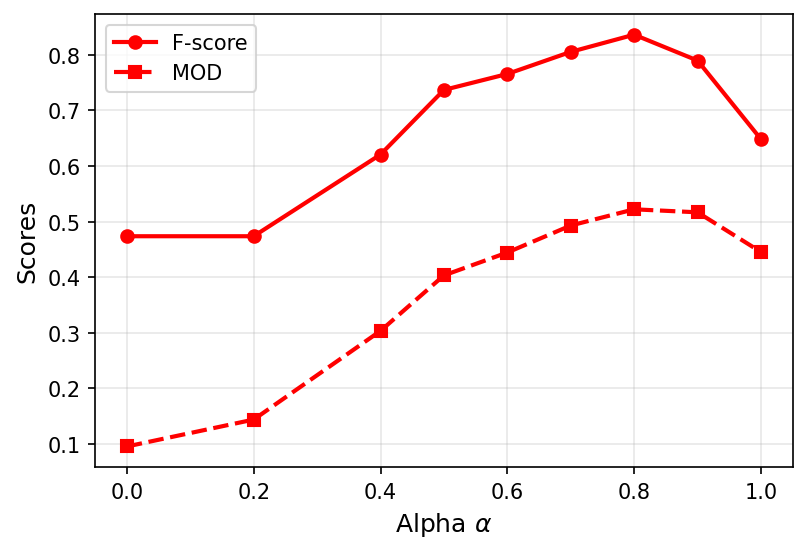}
		\caption{F-score and modularity as function of $\alpha$.}
		\label{fig:SP500_fusion_scad_diff_alpha}
	\end{subfigure}
	\caption{Evolution of the reconstructed graph as a function of $\alpha \in [0,1]$. The extreme case $\alpha = 0$ corresponds to using only side information, whereas $\alpha = 1$ corresponds to using only signal information. Intra-sector connections are represented by the corresponding sector colors, while gray-colored edges indicate connections between nodes belonging to different sectors.}
	\label{fig:SP500_fusion_scad_result}%
\end{figure*}

\section{Conclusion}

In this paper, we proposed a side-information-aware graph learning formulation that simultaneously exploits observed signals and additional node attributes.
To efficiently solve the resulting optimization problem, we developed a MM algorithm, leading to a sequence of tractable subproblems with closed-form updates.
Experimental results on financial data demonstrated that the proposed method significantly improves clustering performance, outperforming existing graph learning approaches that rely only on signal observations.
These results highlight the importance of incorporating heterogeneous side information for enhanced graph inference and downstream learning tasks.
Future work will analyze the joint tuning of the fusion parameter $\alpha$ and the sparsity parameter $\lambda$.

\bibliographystyle{IEEEtran}
\bibliography{references}

\end{document}